%% file: neurips_2026.tex
\documentclass{article}
\usepackage{wrapfig}

\usepackage{amsthm}
\usepackage[most]{tcolorbox}
\usepackage[table]{xcolor}
\usepackage{subcaption}
\definecolor{promptbrown}{RGB}{150,70,20}
\definecolor{promptgreen}{RGB}{0,110,45}
\definecolor{promptpurple}{RGB}{120,45,170}
\definecolor{promptblue}{RGB}{20,40,150}
\newtheorem{proposition}{Proposition}
\usepackage[preprint]{neurips_2026}

\usepackage[utf8]{inputenc} 
\usepackage[T1]{fontenc}    
\usepackage{hyperref}       
\usepackage{url}            
\usepackage{booktabs}       
\usepackage{amsfonts}       
\usepackage{nicefrac}       
\usepackage{microtype}      
\usepackage{xcolor}         
\usepackage{amsmath} 
\usepackage{graphicx}
\title{\textsc{RICE-PO}: Turning Retrieval Interactions into Credit Signals for Reasoning Agents}

\author{Mingchen Li\textsuperscript{\normalfont 1}\thanks{indicates co-author}, Hansi Zeng\textsuperscript{\normalfont 1}\footnotemark[1], Zhuo Qian\textsuperscript{\normalfont 2}\footnotemark[1], Jiatan Huang\textsuperscript{\normalfont 3}, Sunjae Kwon\textsuperscript{\normalfont 1} \\ \textbf{Hamed Zamani}\textsuperscript{\normalfont 1}, \textbf{Hong Yu}\textsuperscript{1}\\
\textsuperscript{1}University of Massachusetts, Amherst 
        \textsuperscript{2}Texas Tech University\\
         \textsuperscript{3} University of Connecticut \\
        }

\begin{document}



\maketitle

\input{1_abs}
\input{2_introduction}
\input{2_related_work}
\input{3_preliminaries}
\input{3_method}
\input{4_experiment}

\input{6_conclution}

\onecolumn
\bibliography{reference}
\bibliographystyle{plain}

\appendix

\section{Technical appendices and supplementary material}

\subsection{Implementation}
 Since BRIGHT and BEIR are primarily designed for evaluation, we construct BRIGHT-style training data based on the dataset released by ReasonIR. ReasonIR builds query-document pairs for reasoning-intensive information retrieval using prompt engineering and human annotation, covering multiple domains aligned with BRIGHT. We sample 4,011 examples for training and 1,384 examples for testing. For the BEIR datasets, we construct the training and validation splits from the available training data of DBPedia-Entity, FiQA-2018, and SciFact. Specifically, we sample 4,008 examples for training and 1,447 examples for testing. Detailed task-level statistics for both BRIGHT and BEIR are provided in Appendix~\ref{tab:rl_bright_stats} and Appendix~\ref{tab:rl_beir_stats}.  We use DeepSeek-R1-Distill-Qwen-1.5B\footnote{\url{https://huggingface.co/deepseek-ai/DeepSeek-R1-Distill-Qwen-1.5B}}, Qwen3-4B-Thinking-2507\footnote{\url{https://huggingface.co/Qwen/Qwen3-4B-Thinking-2507}}, and Qwen2.5-3B-Instruct\footnote{\url{https://huggingface.co/Qwen/Qwen2.5-3B-Instruct}} as our base models. 
For policy optimization, we set the actor KL coefficient to \(0.001\). 
For our credit-estimation module, we select the top \(K=4\) summary actions according to entropy and sample \(5\) Monte Carlo branches for each selected anchor to estimate reasoning-to-summary influence and residual effects. 
The influence threshold and residual-effect threshold are set to \(0.05\) and \(0.03\), respectively. 
We train with a batch size of \(128\), an actor learning rate of \(1\times10^{-6}\), and a maximum rollout depth of \(5\) retrieval steps.  Following Diver, we use Diver-4B with BM25 as the retriever for all RL-based methods in this paper.
\begin{table}[t]
\centering
\caption{Statistics of the BRIGHT benchmark splits used in our experiments.}
\label{tab:rl_bright_stats}
\small
\begin{tabular}{lccc}
\hline
Category & Train & Test & Total \\
\hline
Bio   & 414 & 103 & 517 \\
Earth & 404 & 116 & 520 \\
Econ  & 412 & 103 & 515 \\
Psy   & 424 & 101 & 525 \\
Rob   & 275 & 101 & 376 \\
Stack & 418 & 117 & 535 \\
Sus   & 426 & 108 & 534 \\
Leet  & 420 & 142 & 562 \\
Pony  & 94  & 112 & 206 \\
AoPS  & 384 & 111 & 495 \\
TheoQ & 0   & 194 & 194 \\
TheoT & 340 & 76  & 416 \\
\hline
Total & 4011 & 1384 & 5395 \\
\hline
\end{tabular}
\end{table}

\begin{table}[t]
\centering
\caption{Statistics of the BEIR splits used in our experiments.}
\label{tab:rl_beir_stats}
\small
\begin{tabular}{lccc}
\hline
Dataset & Train & Test & Total \\
\hline
DBPedia-Entity & 23   & 400 & 423 \\
FiQA-2018      & 3183 & 648 & 3831 \\
SciFact        & 802  & 300 & 1102 \\
TREC-COVID     & 0    & 50  & 50 \\
Touché-2020    & 0    & 49  & 49 \\
\hline
Total          & 4008 & 1447 & 5455 \\
\hline
\end{tabular}
\end{table}

\subsection{Prompt Template for Multi-step Retrieval Agent}

Figure~\ref{fig:prompt_template}  shows the prompt template for the multi-step retrieval agent.
\begin{figure}[t]
\centering
\begin{tcolorbox}[
    width=0.96\linewidth,
    colback=white,
    colframe=black,
    boxrule=1.0pt,
    arc=4mm,
    left=3mm,
    right=3mm,
    top=3mm,
    title=\textbf{Prompt Template for Multi-step Retrieval Agent},
    coltitle=white,
    colbacktitle=black,
    fonttitle=\normalsize\bfseries,
    bottom=3mm
]
\small
\rmfamily

You are a professional multi-step retrieval agent.
Your role is to iteratively refine a query through thinking and summarization,
and use an external retriever (not shown to you) to gather increasingly relevant documents.

You will be given:
a user query wrapped in {\color{promptblue}\texttt{<query> ... </query>}},
and an initial set of retrieved documents wrapped in
{\color{promptblue}\texttt{<information> ... </information>}}.

At each step, produce a refined summary to request another retrieval round.

\textbf{IMPORTANT:}
The {\color{promptpurple}\texttt{<summary>}} you produce will be used directly as the query for the next retrieval round.
It guides the system to fetch a new set of documents in
{\color{promptblue}\texttt{<tool\_response> ... </tool\_response>}}.
Although your summaries help obtain better evidence, they do NOT replace or modify
the original user query.

\textbf{DIVERSITY CONSTRAINTS (CRITICAL):}
For each round, choose ONE primary retrieval angle and center the summary on it.
Candidate angles include: definition, mechanism, evidence, counterexample, boundary condition,
implementation detail, evaluation metric, temporal context, and domain-specific terminology.
Do not reuse the same opening sentence pattern across rounds.
Avoid generic filler. Prefer concrete entities, key terms, constraints, and disambiguation cues.
Keep lexical overlap with previous summary low; rewrite with different wording when possible.
If multiple hypotheses exist, prioritize one and add 1--2 alternatives as optional hooks.

\textbf{STRICT OUTPUT FORMAT.}
At every step, your output MUST be in the following format:
{\color{promptgreen}\texttt{<think> ... </think>}}
and
{\color{promptpurple}\texttt{<summary> ... </summary>}}.

After you output {\color{promptpurple}\texttt{<summary>}}, the system will automatically retrieve new documents
and provide them to you in the next turn. Do NOT generate any placeholder or tag
after {\color{promptpurple}\texttt{</summary>}}.

\textbf{DETAILS \& REQUIREMENTS.}

{\color{promptgreen}\texttt{<think> ... </think>}}
Evaluate whether the current documents address the original user query.
Describe what parts of the documents satisfy the query and what is still missing.
MAXIMUM 200 words.

{\color{promptpurple}\texttt{<summary> ... </summary>}}
Produce a retrieval-oriented summary query.
0--500 words.
A refined, retrieval-friendly description of missing information.
This summary will become the query for the next retrieval round.
Combine the original query with only the latest round context.
Include 4--8 concrete keywords or entities that increase retrieval specificity.
Include at least one exclusion or disambiguation phrase, for example: not X, focus on Y.

\end{tcolorbox}
\caption{Prompt template for the multi-step retrieval agent.}
\label{fig:prompt_template}
\end{figure}

\subsection{Proof of Proposition~\ref{prop:summary_credit}}

\begin{proof}
Let $h_t$ be the history before step $t$, and let $z_t$ be the reasoning action that induces summary $s_t$. Define the reasoning-level advantage as
\[
A_t^{\mathrm{think}}
=
\mathbb{E}[R \mid h_t,z_t]
-
\mathbb{E}_{z_t'\sim\pi(\cdot\mid h_t)}
[
R \mid h_t,z_t'
].
\]
Assume that the final reward can be decomposed as
\[
R = r(s_t) + \epsilon_t ,
\]
where $r(s_t)$ is the immediate retrieval reward of the induced summary and $\epsilon_t$ denotes the remaining future effect. Define the summary-level advantage as
\[
A_t^{\mathrm{sum}}
=
r(s_t)
-
\mathbb{E}_{z_t'\sim\pi(\cdot\mid h_t)}
[
r(s_t')
].
\]
Then,
\[
A_t^{\mathrm{think}}
=
A_t^{\mathrm{sum}}
+
\Delta_{\epsilon},
\]
where
\[
\Delta_{\epsilon}
=
\mathbb{E}[\epsilon_t \mid h_t,z_t]
-
\mathbb{E}_{z_t'\sim\pi(\cdot\mid h_t)}
[
\epsilon_t' \mid h_t,z_t'
].
\]
Therefore, if the residual future effect is bounded,
\[
|\Delta_{\epsilon}| \leq \delta ,
\]
then the summary-level advantage approximates the reasoning-level advantage with error at most $\delta$:
\[
\left|
A_t^{\mathrm{think}}
-
A_t^{\mathrm{sum}}
\right|
\leq
\delta .
\]
Moreover, if the induced summary has a non-trivial advantage,
\[
|A_t^{\mathrm{sum}}| \geq \tau ,
\]
then the summary signal is strong enough to provide meaningful credit to the reasoning action. Thus, when both conditions hold, $A_t^{\mathrm{sum}}$ can be used as a reliable proxy for assigning credit to the reasoning tokens that produced $s_t$.
\end{proof}

\subsection{Performance on BRIGHT under different case-selection strategies.}
Table~\ref{tab:step_advantage_ablation} reports the ablation results on BRIGHT. 
Compared with the three credit-assignment variants, our full method achieves the best average performance, improving the average score from \(25.82\) for the strongest variant to \(27.49\). 
This suggests that simply propagating credit from either the current-step or final-step summary is not sufficient for reliable reasoning credit assignment. 
By jointly considering uncertainty, reasoning-to-summary influence, and residual effect, our method assigns credit more selectively and achieves more stable gains across reasoning-intensive retrieval tasks.

\begin{table}[t]
\centering 
\caption{Ablation results on BRIGHT for different step-advantage selection strategies.}
\label{tab:step_advantage_ablation}
\resizebox{1\linewidth}{!}{
\begin{tabular}{l|ccccccc|cc|ccc|c}
\hline
Method 
& Bio. & Earth. & Econ. & Psy. & Rob. & Stack. & Sus.
& Leet. & Pony
& AoPS & TheoQ. & TheoT.
& Avg. \\
\hline
Case 2 
& 35.59 & 45.03 & 20.24 & 25.10 & 15.23 & 24.98 & 18.07
& 30.28 & 11.54
& 9.66 & 34.88 & 36.01
& 25.55 \\
Case 1
& 36.16 & 43.06 & 20.29 & 30.60 & 15.70 & 22.10 & 20.70
& 29.35 & 9.44
& 9.35 & 35.00 & 37.10
& 25.74 \\
Random
& 39.75 & 41.95 & 20.87 & 27.45 & 16.74 & 23.32 & 19.10
& 30.60 & 12.51
& 8.53 & 32.41 & 36.55
& 25.82 \\
Ours
& 40.17 & 43.79 & 18.35 & 32.13 & 17.47 & 22.61 & 22.54
& 34.13 & 10.79
& 10.77 & 40.06 & 37.00
& 27.49 \\
\hline
\end{tabular}
}
\end{table}

\subsection{Ablation on Triggers (Random Trigger vs. Ours Entropy).}
\begin{table}[t]
\centering
\caption{Random Trigger vs. Ours entropy.}
\small
\setlength{\tabcolsep}{4pt}
\resizebox{\linewidth}{!}{
\begin{tabular}{l|cccccccccccc|c}
\hline
Method 
& Bio. & Earth. & Econ. & Psy. & Rob. & Stack. & Sus. & Leet. & Pony & AoPS & TheoQ. & TheoT. & Avg. \\
\hline
Random Trigger 
& 39.75 & 41.95 & 20.87 & 27.45 & 16.74 & 23.32 & 19.10 & 30.60 & 12.51 & 8.53 & 32.41 & 36.55 & 25.82 \\
Ours 
& 40.17 & 43.79 & 18.35 & 32.13 & 17.47 & 22.61 & 22.54 & 34.13 & 10.79 & 10.77 & 40.06 & 37.00 & 27.49 \\
\hline
\end{tabular}
}
\label{tab:ablationrrandom_triiger}
\end{table}
Table~\ref{tab:ablationrrandom_triiger} compares random trigger selection with our entropy-based trigger selection on BRIGHT. 
Although both methods use the same credit-propagation framework, our method selects high-uncertainty summary actions as anchors for local credit estimation, while Random Trigger selects anchors randomly. 
The results show that our method improves the average score from \(25.82\) to \(27.49\), with clear gains on Biology, Earth Science, Psychology, Sustainable Living, LeetCode, AoPS, TheoremQA-Questions, and TheoremQA-Theorems. 
This suggests that uncertainty-guided anchor selection is more effective than random selection for identifying reasoning steps where local counterfactual credit estimation is useful.

\begin{table}[t]
\centering
\caption{Ablation of influence and residual-effect gates on BRIGHT.}
\small
\setlength{\tabcolsep}{4pt}
\resizebox{\linewidth}{!}{
\begin{tabular}{l|cccccccccccc|c}
\hline
Method 
& Bio. & Earth. & Econ. & Psy. & Rob. & Stack. & Sus. 
& Leet. & Pony & AoPS & TheoQ. & TheoT. & Avg. \\
\hline
Effect Only
& 35.64 & 41.23 & 20.08 & 28.07 & 20.80 & 21.30 & 20.64
& 30.37 & 13.03 & 10.56 & 34.00 & 35.01 & 25.89 \\
Influence Only
& 40.15 & 41.74 & 20.92 & 32.11 & 17.81 & 22.83 & 20.54
& 29.76 & 11.50 & 10.11 & 34.24 & 37.07 & 26.56 \\
Both (Ours)
& 40.17 & 43.79 & 18.35 & 32.13 & 17.47 & 22.61 & 22.54
& 34.13 & 10.79 & 10.77 & 40.06 & 37.00 & 27.49 \\
\hline
\end{tabular}
}

\label{tab:influence_effect_ablation}
\end{table}
\subsection{Training Reward}
Figure~\ref{fig:reward_compare} shows the reward dynamics on BRIGHT and BEIR. 
All methods in this comparison are trained with the same base model, DeepSeek-R1-Distill-Qwen-1.5B, and use the same Diver-4B+BM25 retrieval backend. 
Compared with GRPO, GiGPO, HGPO, and Tree-GRPO, our method achieves a higher reward curve on both benchmarks, indicating more effective policy optimization under the same model and retriever setting. 
The improvement is especially clear on BRIGHT, where reasoning-intensive retrieval requires more careful credit assignment across multi-turn interactions.
\begin{figure}[t]
    \centering
    \begin{subfigure}{0.48\textwidth}
        \centering
        \includegraphics[width=\linewidth]{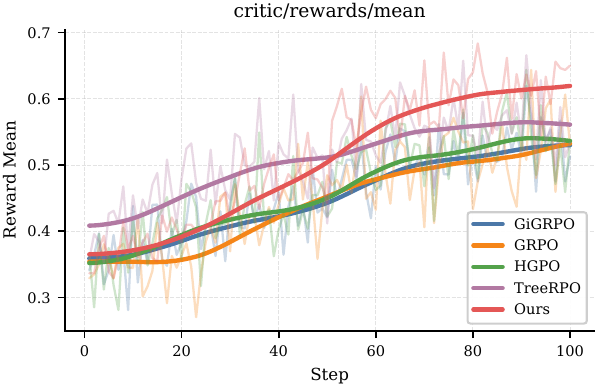}
        \caption{BRIGHT}
        \label{fig:reward_bright}
    \end{subfigure}
    \hfill
    \begin{subfigure}{0.48\textwidth}
        \centering
        \includegraphics[width=\linewidth]{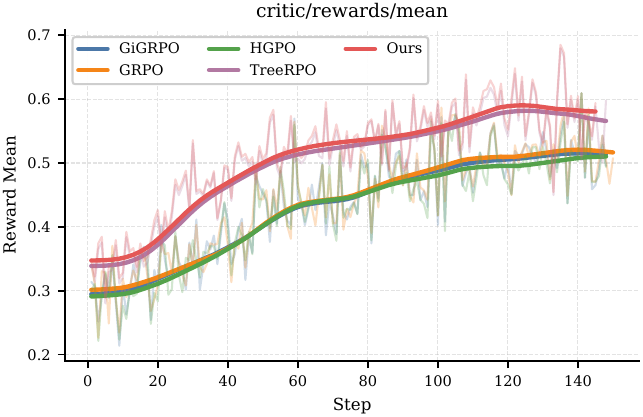}
        \caption{BEIR}
        \label{fig:reward_beir}
    \end{subfigure}
    \caption{Reward dynamics comparison on BRIGHT and BEIR.}
    \label{fig:reward_compare}
\end{figure}


\end{document}

%% file: 1_abs.tex
\begin{abstract}
Retrieval is increasingly moving from one-shot matching toward interactive reasoning, where language agents iteratively inspect evidence, reformulate queries, and search again. 
Training such agents raises a credit-assignment challenge: executable actions such as queries or summaries can be directly evaluated by the retriever, while latent reasoning steps are not directly observable and only affect future executable actions. 
This asymmetry makes outcome-level reward assignment unreliable, as the same final reward may credit reasoning steps that did not actually shape retrieval success. 
We propose RICE-PO, a critic-free policy optimization framework that converts retrieval interactions into localized learning signals. 
RICE-PO selects high-uncertainty executable actions as anchors, evaluates local counterfactual branches using retrieval metrics, and propagates credit to latent reasoning steps only when reasoning-to-action influence is strong and future residual effects are stable. 
On BRIGHT and BEIR, RICE-PO consistently outperforms prompt-based agents and group-based RL baselines under the same retriever setting. 
These results show that the structure of agent-environment interaction itself can provide useful supervision for training reasoning-based retrieval agents.
\end{abstract}


%% file: 2_introduction.tex
\section{Introduction}

As the scale and complexity of information needs continue to grow, retrieval is increasingly shifting from a one-shot matching problem toward an interactive reasoning process~\cite{long2025diver,jin2025search,li2025search}. Rather than relying on a single query-document comparison, a reasoning agent may need to interact with an external retriever over multiple rounds.  At each round, the agent reasons over the retrieved evidence to identify missing constraints, reformulates the query accordingly, and sends the refined query back to the retriever, which executes the search over the document collection. This interaction loop is especially important in reasoning-intensive retrieval, where relevant evidence is often not connected to the original query by surface lexical overlap or shallow semantic similarity.

Recent methods have therefore begun to incorporate retrieval interaction into reasoning-intensive retrieval. For example, ThinkQE~\citep{lei2025thinkqe} and Diver~\citep{long2025diver} use prompt-based interaction with retrieval tools to iteratively refine search inputs, but they do not learn from the interaction process itself. Instead, they rely on fixed prompting strategies and tool feedback at inference time. More recently, RL-based methods directly optimize query generation with retrieval feedback~\citep{li2026think, jiang2025deepretrieval}. While these methods improve retrieval-oriented generation, they still leave a key question unresolved: how should credit be assigned when retrieval trajectories contain actions with different observability and only sparse reward signals?


Our key observation is that retrieval interactions contain not only final outcome rewards, but also intermediate signals that can be used for credit assignment. Executable actions, such as generated queries or summaries, can be directly submitted to the retriever and evaluated with retrieval metrics. In contrast, latent reasoning actions are not directly executable or measurable; they should receive credit only when alternative reasoning continuations from the same history lead to meaningfully different executable summaries and retrieval rewards.
 This observation motivates a two-stage process for converting interaction into learning signals. 
First, the agent should identify which intermediate decisions are worth inspecting, since not every reasoning step substantially changes the following executable summary or its retrieval score.
Second, once an executable summary receives retrieval feedback, the agent should decide whether this feedback can be reliably attributed back to the preceding latent reasoning span.

Based on this view, we propose a critic-free framework that converts retrieval interactions into action-level learning signals. 
\textbf{First}, it selects critical decision points using three branch diagnostics: policy uncertainty, which identifies high-entropy executable summaries; reasoning-to-summary influence, which measures whether alternative reasoning continuations lead to different summary rewards; and future residual stability, which checks whether later retrieval rounds preserve rather than overturn the local reward difference.
\textbf{Second}, for the selected points, it constructs local counterfactual branches from the same interaction history and evaluates the resulting summaries with retrieval metrics, producing step-specific summary rewards. 
\textbf{Third}, it propagates these summary rewards to latent reasoning spans only when the future residual effect is stable and the reasoning-to-summary influence is sufficiently strong, preventing local rewards from being copied backward when later interaction steps dominate or overturn the current summary reward.
This design preserves the efficiency of group-based policy optimization while introducing fine-grained supervision without learned critics, process reward models, or exhaustive per-step rollouts. Our contributions are summarized as follows:

\begin{itemize}
    \item \textbf{We introduce an observability-aware credit assignment perspective} for reasoning-based retrieval, showing that executable summaries can provide direct retrieval feedback, while latent reasoning spans require selective backward credit propagation.

    \item \textbf{We propose a critic-free credit assignment framework} that exploits the executable-latent action asymmetry of retrieval agents: executable summaries provide step-specific retrieval rewards, while latent reasoning spans receive credit through uncertainty-triggered counterfactual evaluation and residual-stability gating.

    \item \textbf{We conduct experiments on BRIGHT and BEIR retrieval tasks}, demonstrating consistent gains over prompt-based agents and group-based RL baselines, together with analyses of uncertainty selection, local feedback quality, credit propagation reliability, and branching cost.
\end{itemize}

%% file: 2_related_work.tex
\section{Related Work} \paragraph{Interactive Retrieval for Reasoning-Intensive Tasks} Reasoning-intensive retrieval has long been a central problem in information retrieval (IR)~\cite{singhal2001modern}, with broad applications in search, recommendation\cite{ko2022survey,huang2026glen,wen2026smartsearch}, and question answering~\cite{li2022semantic,huang2026evolverouter}. Unlike conventional keyword-based retrieval, these tasks often require systems to interpret implicit constraints, infer missing context, and transform complex information needs into effective search queries.To improve interaction with the retriever, supervised methods fine-tune LLMs or sequence-to-sequence models to produce clearer and more specific queries~\citep{wu2021conqrr,liu2021conversational,mo2023convgqr}. In parallel, prompt-based methods use LLMs to generate pseudo-documents, reasoning traces, or expanded queries without additional training, including Query2Doc~\citep{wang2023query2doc}, ThinkQE~\citep{lei2025thinkqe}, Diver~\citep{long2025diver}, and TongSearch~\citep{qin2025tongsearch}. Despite their effectiveness, these methods either rely on costly rewrite annotations or depend on fixed prompting strategies, and thus provide limited guidance on how an agent should learn from the retrieval interaction itself. More recently, RL-based methods have been introduced to optimize retrieval-oriented generation using feedback from the retriever, moving beyond fixed prompts and supervised rewrite pairs. \paragraph{RL-based Agents for Interactive Retrieval} RL-based interactive retrieval is appealing because retrieval metrics can directly serve as rewards for generated queries. In this setting, an LLM acts as a policy that produces expanded queries, and the downstream retriever provides feedback through metrics such as NDCG or recall. Although PPO~\citep{schulman2017proximal} is commonly used for RL-based LLM optimization, its learned critic is costly to train. Value-free group-based methods, such as GRPO~\citep{shao2024deepseekmath}, reduce this cost by estimating advantages from multiple candidate expansions sampled for the same query. 
However, existing RL-based methods mainly optimize the final outcome of the retrieval interaction, with limited study in reasoning-intensive IR. Since such tasks require agents to interact with the retriever over multiple steps to uncover implicit constraints and missing evidence, outcome-level rewards provide little guidance on which internal actions in the interaction are actually useful. \paragraph{Reward Shaping and Process Supervision} To address reward sparsity, recent studies have proposed credit assignment methods for LLM-based RL~\citep{feng2025group,he2026hierarchy,ji2026tree}. For example, GIGPO~\citep{feng2025group} uses anchor states to group trajectories and improve advantage estimation. HGPO~\citep{he2026hierarchy} further estimates step-wise advantages by hierarchically grouping actions that share similar states and historical contexts, and adaptively weighting these groups to reduce variance and bias. Tree-GRPO~\citep{ji2026tree} constructs tree-structured agent trajectories to derive step-level supervision from outcome rewards, estimating relative advantages at both intra-tree and inter-tree levels for better credit assignment than chain-based RL. Unlike these methods, our approach treats retrieval as a structured interaction between latent reasoning actions and executable retrieval actions. It remains fully critic-free and does not require auxiliary value models, external judges, or human annotations, while deriving finer-grained credit signals from the retriever feedback available within the interaction itself.

%% file: 3_preliminaries.tex
\section{Preliminaries}

\paragraph{Problem setup}
We consider a query-side reasoning setting for reasoning-based retrieval, where an LLM agent interacts with a fixed retriever over multiple retrieval rounds. Given an initial user query $q_0$ and an initial set of retrieved documents $D_0$, the input is denoted as $x=(q_0,D_0)$. At each discrete step $t=1,2,\ldots,T$, the agent observes the original query $q_0$, the latest retrieved documents $D_{t-1}$, and the interaction history $h_{t-1}$. It then generates two textual segments: a reasoning segment $z_t \in \mathcal{V}^{n_z}$ and a summary segment $s_t \in \mathcal{V}^{n_s}$, where $\mathcal{V}$ denotes the token vocabulary. The summary $s_t$ is used directly as the query for the next retrieval round, and the fixed retriever returns a new document set
$
D_t = \mathcal{R}(s_t).
$
A full episode consists of a trajectory
$
\tau = \{(z_1,s_1,D_1), (z_2,s_2,D_2), \ldots, (z_T,s_T,D_T)\}.
$
The agent's behavior is governed by an LLM policy
$
\pi_\theta(z_t,s_t \mid q_0,D_{t-1},h_{t-1}),
$
parameterized by $\theta$, which defines a distribution over the generated reasoning and summary segments conditioned on the current retrieval context. After the final round, the environment returns a scalar retrieval reward $R(\tau)\in\mathbb{R}$, such as NDCG@10, based on the final retrieved documents or final summary. 
The summary can receive a direct retrieval reward because it is used as the query for the next retrieval round. In contrast, the reasoning span is internal: it only affects retrieval indirectly through the following summary and later turns. Therefore, directly assigning the summary reward to the reasoning tokens can cause credit misassignment.


%% file: 3_method.tex
\section{Method}

\begin{figure*}[t]
        \centering
        \includegraphics[width=1\columnwidth]{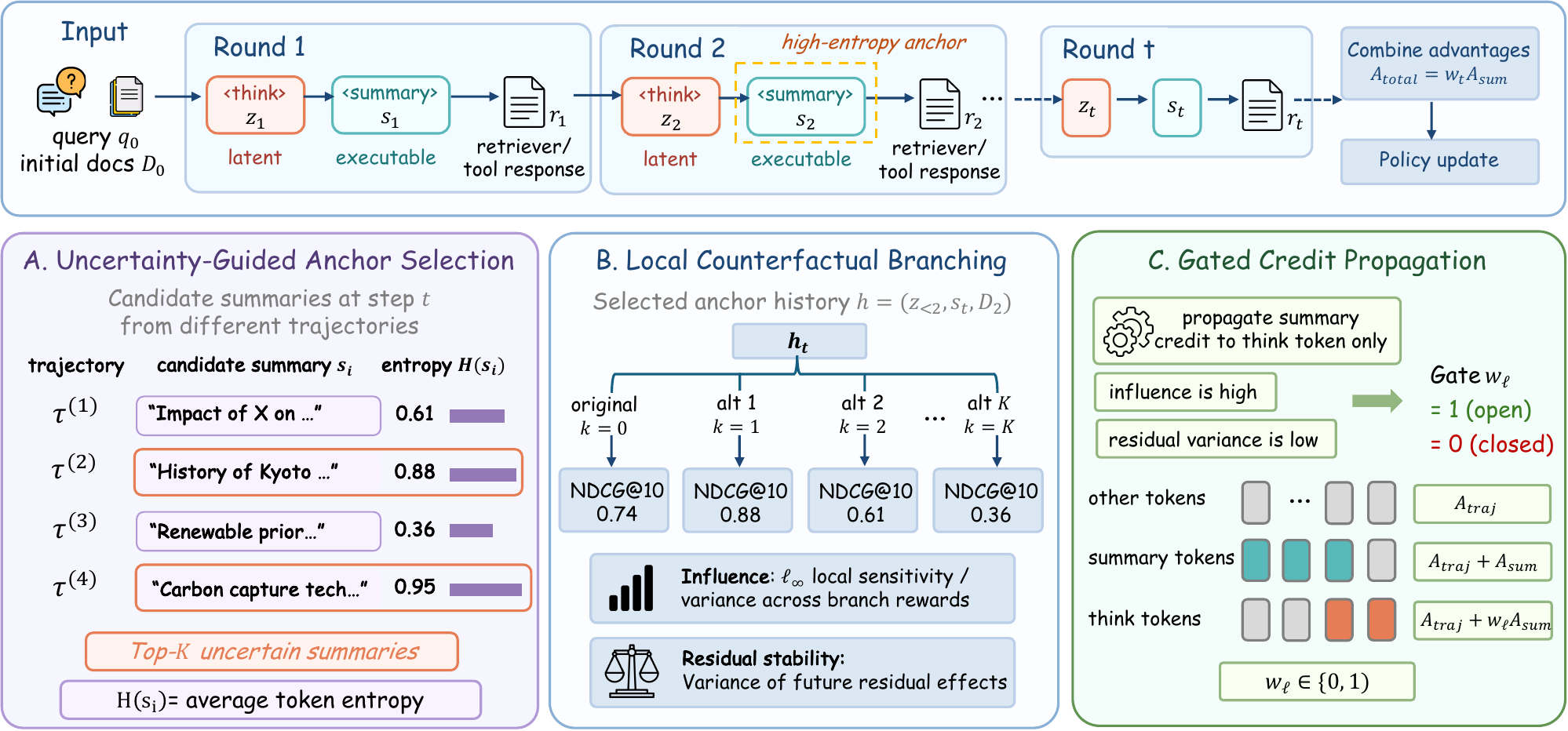}
	\caption{Overview of RICE-PO. The agent generates latent reasoning spans and executable summaries across multiple retrieval rounds. RICE-PO selects high-entropy summary anchors, estimates local influence and residual stability through counterfactual branches, and propagates summary-level credit to reasoning tokens only when the signal is reliable. }
	\label{fig:method}
  \vspace{-10pt}
\end{figure*}

\subsection{The Issue of Reasoning-Credit Ambiguity}
Group-relative advantage estimation typically compares trajectories sampled from the same query
and assigns advantages at the trajectory level. However, a multi-turn retrieval agent produces
heterogeneous actions within the interaction. Executable retrieval actions, such as summaries or
queries, can be directly submitted to the retriever and evaluated by metrics such as NDCG@10.
In contrast, latent reasoning actions are not directly executable; they affect retrieval only by shaping
subsequent executable actions and downstream interaction steps. As a result, broadcasting the final
trajectory reward to all reasoning tokens can introduce biased credit, because a successful final
retrieval outcome may be caused by later recovery rather than by the current reasoning action.

Rather than relying on step-level annotations or an external process reward model, we use executable
retrieval actions as intermediate credit anchors. Since summaries can be directly evaluated by the
retriever, they provide localized retrieval feedback for the surrounding interaction. The remaining
question is whether this localized feedback can be reliably attributed to the preceding latent reasoning
span. To answer this, we estimate three interaction-derived signals: policy entropy, reasoning-to-summary
influence, and residual stability. Policy entropy identifies uncertain decision points worth inspecting;
reasoning-to-summary influence measures whether the latent reasoning action actually shapes the
neighboring executable summary; and residual stability tests whether the local feedback remains
reliable after later interaction steps. Motivated by this view, we propose \textbf{RICE-PO},
a \textbf{Retrieval-Interaction Credit Estimation for Policy Optimization} framework that refines
trajectory-level advantages into localized advantages for both executable retrieval actions and latent
reasoning actions.

\subsection{Retrieval-Interaction Credit Estimation for Policy Optimization}
In this subsection, we introduce \textbf{RICE-PO}, as illustrated in Figure~\ref{fig:method}. 
Given an input query and initial retrieved documents, we first let the LLM generate \(N\) trajectories. Each trajectory is a multi-step reasoning--retrieval interaction, where each step contains a latent reasoning span \texttt{<think>...</think>}, an executable summary action \texttt{<summary>...</summary>}, and a retriever observation \texttt{<tool\_response>...</tool\_response>}.
We then identify critical decision points using policy uncertainty. For each selected anchor point, we perform Monte Carlo local sensitivity estimation by constructing local branches from the same interaction history. These branches are used to estimate two interaction-derived signals: the influence of the reasoning action on the following summary action, and the residual effect of the selected reasoning action on the final retrieval reward.
Based on these signals, we apply gated propagation to determine whether localized summary feedback should be assigned to the corresponding reasoning action. This produces step-level advantages for both executable summary actions and latent reasoning actions. Finally, we combine the trajectory-level advantage, summary-level advantage, and gated reasoning-level advantage into a final token-level advantage, which is used for policy optimization.

\subsubsection{Uncertainty-Guided Anchor Selection}
We begin by introducing entropy-based action selection, which organizes local counterfactual estimation around uncertain summary actions. The key intuition is that the advantage of a local action should be estimated more carefully when the policy is uncertain about that action. In retrieval agents, such uncertainty is particularly important for summary actions, since summaries are executable queries that directly determine the next retrieved evidence. Specifically, we first pool summary actions from trajectories sampled for the same query, and then select the top-$K$ actions according to their average token entropy. This uncertainty-triggered selection brings two main benefits: (i) it focuses local credit estimation on actions with ambiguous outcomes, and (ii) it reduces the cost of counterfactual branching.

Formally, let the $i$-th trajectory be
$\tau_i=\{(z_1^{(i)}, s_1^{(i)}), (z_2^{(i)}, s_2^{(i)}), \ldots, (z_T^{(i)}, s_T^{(i)})\}.$
For each summary action $s_t^{(i)}$, we define its entropy score as
\begin{equation}
    H(s_t^{(i)})
    =
    \frac{1}{|s_t^{(i)}|}
    \sum_{u \in s_t^{(i)}}
    H\!\left(\pi_\theta(\cdot \mid h_u)\right),
\end{equation}
where $h_u$ denotes the token-level history before token $u$. Based on this score, we define the selected action set as
\begin{equation}
    \mathcal{S}_{K}
    =
    \operatorname{TopK}_{(i,t)}
    H(s_t^{(i)}).
\end{equation}
The resulting set $\mathcal{S}_{K}$ contains the high-uncertainty summary actions on which we perform local counterfactual credit estimation.

\subsubsection{Local Counterfactual Branching} \emph{Influence.}
Given a selected summary action \(s_t^{(i)}\), we estimate whether its immediately preceding reasoning span \(z_t^{(i)}\), generated from history \(h_t^{(i)}\), influences the selected summary.
Ideally, we would compute the conditional expectation and variance of the local summary reward under the policy:
\begin{equation}
    \mu_t
    =
    \mathbb{E}_{z_t \sim \pi(\cdot \mid h_t)}
    \left[
        r(s_t(z_t))
    \right],
    \qquad
    \sigma_t^2
    =
    \mathrm{Var}_{z_t \sim \pi(\cdot \mid h_t)}
    \left[
        r(s_t(z_t))
    \right],
\end{equation}
where $s_t(z_t)$ denotes the summary induced by reasoning action $z_t$ and $r(s_t)$ is the corresponding retrieval reward.
Since the exact expectation is intractable, we approximate it with Monte Carlo branch samples.
At a selected trigger point \(t\), we fix the interaction history \(h_t\) before the reasoning action and construct \(K\) local branches from the same history. The original branch is denoted as \(k=0\). For each branch \(k=0,\ldots,K\), the LLM generates a reasoning span \(z_t^{(k)}\), its neighboring summary \(s_t^{(k)}\), and the corresponding retriever response. The neighboring summary then receives a local retrieval reward
$r_t^{(k)} = r(s_t^{(k)})$
which measures the immediate retrieval quality induced by the reasoning action at step \(t\).
In this paper, \(r(\cdot)\) is computed by using the generated summary together with the input query to retrieve documents with a fixed retriever, and measuring retrieval quality using NDCG@10.
The Monte Carlo estimates are then
\begin{equation}
    \widehat{\mu}_t
    =
    \frac{1}{K+1}
    \sum_{k=0}^{K}
    r_t^{(k)},
    \qquad
    \widehat{\sigma}_t^2
    =
    \frac{1}{K+1}
    \sum_{k=0}^{K}
    \left(
        r_t^{(k)}
        -
        \widehat{\mu}_t
    \right)^2 .
\end{equation}
Here, $\widehat{\sigma}_t^2$ serves as a Monte Carlo proxy for local sensitivity: a larger value means that different reasoning choices under the same history induce summaries with substantially different retrieval rewards.

\emph{Residual effect.}
To check whether the local summary signal is stable against future steps, we continue each branch to the maximum depth and compute the final reward $R_T^{(k)}$ from the last summary. For each branch, we define
\begin{equation}
    \epsilon_t^{(k)}
    =
    R_T^{(k)}
    -
    r_t^{(k)},
    \qquad
    \bar\epsilon_t
    =
    \frac{1}{K+1}
    \sum_{k=0}^{K}
    \epsilon_t^{(k)} ,
        \qquad
         \mathrm{ResVar}_t
    =
    \frac{1}{K+1}
    \sum_{k=0}^{K}
    \left(
        \epsilon_t^{(k)}
        -
        \bar\epsilon_t
    \right)^2 .
\end{equation}
A small \(\mathrm{ResVar}_t\) indicates that subsequent interaction steps have a similar residual effect across local branches. 
In this case, later retrieval rounds do not substantially reorder or overturn the local reward differences induced by the selected reasoning--summary pair. 
Thus, the local summary reward provides a more reliable proxy for the contribution of the paired reasoning action, making backward credit propagation less susceptible to later recovery or downstream correction.
\subsubsection{Gated Credit Propagation}
Based on these statistics, we propagate summary credit to the paired reasoning span only when the local reasoning choice is influential and the residual future effect is stable:
\begin{equation}
    \hat{\sigma}_t^2 \ge \tau_{\mathrm{var}}
    \quad \text{and} \quad
    \mathrm{ResVar}_t \le \tau_{\mathrm{res}} .
\end{equation}
The propagated reasoning credit is then
\begin{equation}
    \Delta_t^{\mathrm{think}}
    =
    \Delta_t^{\mathrm{sum}}.
\end{equation}
Importantly, these Monte Carlo statistics are not used as independent rewards; they only determine how strongly the summary-level credit should be copied back to the paired reasoning span.

\paragraph{Final token-level advantage and policy optimization.}
In this section, we incorporate step-level credit signals into the token-level advantages used for policy optimization. 
We start from the group-normalized trajectory advantage $A_i$, computed from the final-step retrieval reward, i.e., the NDCG@10 obtained by the last summary in trajectory $\tau_i$. This trajectory-level advantage is broadcast to all generated tokens in $\tau_i$. For a selected summary span, we assign the local summary advantage $A_t^{\mathrm{sum}}$, computed from the NDCG@10 scores of alternative summaries generated from the same prefix at step $t$. 
 For tokens in the paired reasoning span, we assign the reasoning-level advantage $A_t^{\mathrm{think}}$, obtained by gated propagation from the corresponding summary advantage. Tokens outside the selected spans retain the original trajectory-level advantage. Formally, for token $u$ in trajectory $\tau_i$,
\begin{equation}
A_u^{\mathrm{final}}
=
\begin{cases}
A_t^{\mathrm{sum}},
& u \in [l_s^{(i,t)}, r_s^{(i,t)}), \\[4pt]
A_t^{\mathrm{think}},
& u \in [l_z^{(i,t)}, r_z^{(i,t)}), \\[4pt]
A_i^{},
& \text{otherwise}.
\end{cases}
\end{equation}

\begin{equation}
A_t^{\mathrm{think}}
=
\begin{cases}
A_t^{\mathrm{sum}},
&
\hat{\sigma}_t^2 \ge \tau_{\mathrm{var}}
\ \text{and}\
\mathrm{ResVar}_t \le \tau_{\mathrm{res}},
\\[4pt]
A_i,
&
\text{otherwise}.
\end{cases}
\end{equation}

Given $A_u^{\mathrm{final}}$, we optimize the policy with a PPO-style clipped objective. Let
\begin{equation}
\rho_u(\theta)
=
\frac{\pi_{\theta}(a_u\mid h_u)}
{\pi_{\theta_{\mathrm{old}}}(a_u\mid h_u)}
\label{eq:token-ratio}
\end{equation}
be the token-level importance sampling ratio. The final objective is
\begin{equation}
\mathcal{J}(\theta)
=
\mathbb{E}
\left[
\sum_{u}
\min
\left(
\rho_u(\theta)A_u^{\mathrm{final}},
\mathrm{clip}\left(\rho_u(\theta),1-\epsilon,1+\epsilon\right)
A_u^{\mathrm{final}}
\right)
\right]
-
\beta_{\mathrm{KL}}
D_{\mathrm{KL}}
\left(
\pi_{\theta}(\cdot\mid x)
\|
\pi_{\mathrm{ref}}(\cdot\mid x)
\right)
\label{eq:ppo-objective}
\end{equation}
, where $\epsilon$ is the clipping coefficient and $\beta_{\mathrm{KL}}$ controls the strength of the KL penalty.

\begin{proposition}[Summary-to-Reasoning Credit Approximation]
\label{prop:summary_credit}
Let $z_t$ be the reasoning action under history $h_t$ that induces summary $s_t$.
Assume $R=r(s_t)+\epsilon_t$, where $r(s_t)$ is the summary retrieval reward and $\epsilon_t$ is the remaining future effect. Then
\[
A_t^{\mathrm{think}}
=
A_t^{\mathrm{sum}}+\Delta_\epsilon,
\]
where
\[
A_t^{\mathrm{sum}}
=
r(s_t)
-
\mathbb{E}_{z_t'\sim\pi(\cdot|h_t)}[r(s_t')],
\quad
\Delta_\epsilon
=
\mathbb{E}[\epsilon_t|h_t,z_t]
-
\mathbb{E}_{z_t'\sim\pi(\cdot|h_t)}
[\epsilon_t'|h_t,z_t'] .
\]
Thus, if $|\Delta_\epsilon|\leq\delta$,
\[
|A_t^{\mathrm{think}}-A_t^{\mathrm{sum}}|\leq\delta .
\]
If additionally $|A_t^{\mathrm{sum}}|\geq\tau$, the summary signal is non-trivial and can be propagated to the reasoning tokens of $z_t$.
\end{proposition}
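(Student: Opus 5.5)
Our plan is a direct algebraic argument from the definitions, followed by the triangle inequality; no earlier result beyond the decomposition assumption $R=r(s_t)+\epsilon_t$ is needed. First we fix precisely what $A_t^{\mathrm{think}}$ means in the proposition. We take the reasoning-level advantage in its natural counterfactual form,
\[
A_t^{\mathrm{think}}=\mathbb{E}[R\mid h_t,z_t]-\mathbb{E}_{z_t'\sim\pi(\cdot\mid h_t)}\bigl[\mathbb{E}[R\mid h_t,z_t']\bigr],
\]
namely the expected return after committing to $z_t$ minus the policy-averaged return from the same history. This is the quantity the gated rule of the method is meant to approximate; it is distinct from the assignment rule $A_t^{\mathrm{think}}=A_t^{\mathrm{sum}}$ used in the algorithm. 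We will also state the modeling convention that the summary $s_t=s_t(z_t)$ is determined by $z_t$ given $h_t$. If summary sampling is stochastic, we replace $r(s_t)$ by $\mathbb{E}[r(s_t)\mid h_t,z_t]$ and $r(s_t')$ by $\mathbb{E}[r(s_t')\mid h_t,z_t']$ throughout; the argument is unchanged.

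Next we substitute the decomposition into both terms and use linearity of conditional expectation. The first term gives $\mathbb{E}[R\mid h_t,z_t]=r(s_t)+\mathbb{E}[\epsilon_t\mid h_t,z_t]$, since $r(s_t)$ is measurable with respect to $(h_t,z_t)$. For the second term we apply the same identity for each alternative $z_t'$ and then average over $z_t'\sim\pi(\cdot\mid h_t)$. This requires integrability of $r$ and $\epsilon_t$, which holds for NDCG-type rewards since they are bounded, so the expectation splits into $\mathbb{E}_{z_t'}[r(s_t')]+\mathbb{E}_{z_t'}[\mathbb{E}[\epsilon_t'\mid h_t,z_t']]$. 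Subtracting the two expressions and regrouping the $r$ terms and the $\epsilon$ terms separately gives exactly $A_t^{\mathrm{sum}}+\Delta_\epsilon$, which proves the identity.

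The bound $|A_t^{\mathrm{think}}-A_t^{\mathrm{sum}}|=|\Delta_\epsilon|\le\delta$ is then immediate. The final clause is interpretive rather than a mathematical claim. We will make it precise by combining the two bounds: if $\delta<\tau$, then the reverse triangle inequality gives $|A_t^{\mathrm{think}}|\ge\tau-\delta>0$. Moreover, $A_t^{\mathrm{think}}$ and $A_t^{\mathrm{sum}}$ then share the same sign. So propagating $A_t^{\mathrm{sum}}$ to the tokens of $z_t$ pushes the policy in the correct direction, with relative error at most $\delta/\tau$.

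The main obstacle is connecting this statement to the gates actually used in the method, rather than the algebra itself. The gates $\mathrm{ResVar}_t\le\tau_{\mathrm{res}}$ and $\hat\sigma_t^2\ge\tau_{\mathrm{var}}$ are Monte Carlo surrogates for $|\Delta_\epsilon|\le\delta$ and $|A_t^{\mathrm{sum}}|\ge\tau$. To justify the first surrogate, we would note that $\Delta_\epsilon$ is the deviation of a conditional mean of $\epsilon$ from its policy average. Chebyshev's inequality, or the bound $|\mathbb{E}[X\mid A]-\mathbb{E}X|\le\sqrt{\mathrm{Var}X/P(A)}$, then controls it by the residual variance. We present this only as a heuristic remark and do not include it in the proof.
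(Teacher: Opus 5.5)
Your proposal is correct and follows the paper's own proof. You take the same counterfactual definition $A_t^{\mathrm{think}}=\mathbb{E}[R\mid h_t,z_t]-\mathbb{E}_{z_t'\sim\pi(\cdot\mid h_t)}[R\mid h_t,z_t']$, substitute $R=r(s_t)+\epsilon_t$, split by linearity, and read off $|A_t^{\mathrm{think}}-A_t^{\mathrm{sum}}|=|\Delta_\epsilon|\le\delta$; your explicit convention that $s_t$ is determined by $z_t$ and your sign-preservation remark for $\delta<\tau$ only make explicit what the paper leaves implicit or states informally.
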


%% file: 4_experiment.tex
\section{Experiments}

\subsection{Experiment Setup} 
\paragraph{Benchmarks}We first evaluate our method on the BRIGHT benchmark~\cite{su2024bright}, which comprises 12 reasoning-intensive retrieval tasks. BRIGHT is designed to evaluate the reasoning ability of retrieval systems, where relevant documents often require in-depth reasoning to identify their connection to complex queries. The benchmark includes queries from diverse sources, such as StackExchange, LeetCode, and TheoremQA, and covers domains including biology, economics, programming, and mathematics. In addition to BRIGHT, we evaluate our method on BEIR~\cite{thakur2021beir}, a broader benchmark for general information retrieval across diverse domains and task types. Specifically, we use five BEIR datasets: DBPedia-Entity, FiQA-2018, SciFact, Touché-2020, and TREC-COVID. These datasets cover entity retrieval, finance question answering, scientific fact checking, argument retrieval, and biomedical information retrieval. We use NDCG@10 as the evaluation metric.

\begin{table*}[t]
\centering
\renewcommand\arraystretch{1.1}
\caption{Performance on BRIGHT. All RL-based methods use the same fixed retriever, Diver-4B+BM25~\cite{long2025diver}. We report NDCG@10 for each task and the macro average across tasks.}
\resizebox{\textwidth}{!}{
\begin{tabular}{l|ccccccc|cc|ccc|c}
\hline
Method
& \multicolumn{7}{c|}{StackExchange}
& \multicolumn{2}{c|}{Coding}
& \multicolumn{3}{c|}{Theorem-based}
& Avg. \\
& Bio. & Earth. & Econ. & Psy. & Rob. & Stack. & Sus.
& Leet. & Pony
& AoPS & TheoQ. & TheoT.
& \\
\hline
\rowcolor{gray!20}
\multicolumn{14}{c}{\textit{Existing retrieval  baselines}} \\
RaDeR
& 36.10 & 42.90 & 25.20 & 37.90 & 16.60 & 27.40 & 25.00
& 34.80 & 11.90
& 12.00 & 37.70 & 43.40
& 29.20 \\
DeepRetrieval
& 22.48 & 25.60 & 17.06 & 21.13 & 13.46 & 16.75 & 16.93
& 24.99 & 8.13
& 3.88 & 27.90 & 22.01
& 18.36 \\
TongSearch-QR
& 46.20 & 45.10 & 31.20 & 39.60 & 25.30& 28.70 & 28.40
& 31.20 & 16.30
& 10.80 & 40.00& 39.30
& 31.90 \\
Diver
& 60.00 & 55.90 & 31.80 & 47.90 & 27.10 & 33.90 & 31.90
& 35.10 & 23.10
& 16.80 & 36.90 & 46.60
& 37.20 \\
\hline
\rowcolor{gray!20}
\multicolumn{14}{c}{\textit{DeepSeek-R1-Distill-Qwen-1.5B}} \\
Prompting agents
& 20.41 & 25.14 & 9.74 & 13.77 & 7.99 & 6.81 & 8.79
& 13.08 & 7.01
& 3.03 & 13.48 & 11.74
& 11.75 \\
GRPO
& 34.50 & 45.61 & 18.71 & 31.68 & 17.86 & 19.88 & 22.38 & 27.98 & 11.25 & 5.42 & 31.08 & 33.48 & 24.98 \\
GiGPO
& 37.00 & 41.58 & 23.41 & 29.10 & 16.81 & 22.28 & 21.41 & 33.53 & 11.83 & 7.96 & 31.14 & 34.46 & 25.88 \\
HGPO
& 36.36 & 41.73 & 20.02 & 26.94 & 15.00 & 23.62 & 20.46
& 29.16 & 11.65
& 7.81 & 34.58 & 38.01
& 25.45 \\
Tree-GRPO&
37.24 & 44.09 & 19.42 & 28.73 & 18.99 & 22.99 & 21.65 & 29.25 & 11.73 & 10.79 & 30.68 & 36.49&26.00\\
\rowcolor[RGB]{222,230,241}
\textbf{RICE-PO(ours)}
& 40.17 & 43.79 & 18.35 & 32.13 & 17.47 & 22.61 & 22.54 & 34.13 & 10.79 & 10.77 & 40.06 & 37.00 & 27.49  \\
\hline
\rowcolor{gray!20}
\multicolumn{14}{c}{\textit{Qwen3-4B-Thinking-2507}} \\
Prompting agents
& 54.36 & 54.89 & 27.04 & 40.54 & 26.12 & 32.44 & 29.38
& 30.80 & 15.36
& 12.99 & 40.38 & 47.27
& 34.30 \\
GRPO
& 58.03 & 56.82 & 29.08 & 42.57 & 27.97 & 34.52 & 27.28
& 36.61 & 17.44
& 12.44 & 44.92 & 49.29
& 36.42 \\
GiGPO
& 57.82 & 57.80 & 27.91 & 41.32 & 27.94 & 32.75 & 29.35
& 35.69 & 17.91
& 16.15 & 45.94 & 45.19 & 36.31 \\
HGPO&
61.77 & 59.53 & 27.87 & 41.93 & 26.70 & 33.99 & 32.72
& 30.81 & 19.72
& 16.07 & 43.02 & 45.16 & 36.61  \\
Tree-GRPO
& 60.19 & 56.29 & 27.91 & 43.47 & 27.60 & 31.09 & 30.09
& 35.91 & 16.92
& 15.59 & 43.12 & 48.47
& 36.39 \\
\rowcolor[RGB]{222,230,241}
\textbf{RICE-PO(ours)}
&59.32 & 59.15& 28.49 & 42.00 & 28.18 & 35.00 & 30.73
& 35.37 & 17.14
& 18.97 & 45.39 & 49.37 & 37.76 \\
\hline
\end{tabular}
}
\label{tab:reasonir-results}
\vspace{-10pt}
\end{table*}

\paragraph{Baselines}
For BRIGHT, we compare our approach with two groups of competitive baselines.
First, we include existing reasoning-based retrieval, including RaDeR~\cite{das2025rader}, DeepRetrieval~\cite{jiang2025deepretrieval}, TongSearch-QR~\cite{qin2025tongsearch}, and Diver~\cite{long2025diver}. Among them, Diver represents a strong state-of-the-art prompting-based baseline. In the interaction process, Diver uses a hybrid retriever that combines Diver-4B and BM25, and employs DeepSeek-R1-Distill-Qwen-14B to summarize the retrieved information.
Second, we compare with recent RL-based credit assignment methods, including GRPO~\citep{shao2024deepseekmath}, GiGPO~\citep{feng2025group}, HGPO~\citep{he2026hierarchy}, and Tree-GRPO~\citep{ji2025tree}. Since there is limited prior RL work specifically designed for reasoning-intensive retrieval, we adapt them to our retrieval-agent setting. For fair comparison, we evaluate them with two backbone language models, DeepSeek-R1-Distill-Qwen-1.5B and Qwen3-4B-Thinking-2507. In all prompting-agent settings, the LLM is used as the generator and Diver-4B+BM25 is used as the retriever, matching the retriever used by Diver and our method.
For BEIR, following Rank1~\citep{weller2025rank1}, we include several strong reranking baselines: MonoT5-3B~\citep{nogueira2019document}, mT5-13B fine-tuned on MMARCO for multilingual retrieval tasks~\citep{jeronymo2023neuralmind}, and RankLLaMA-7B/13B~\citep{ma2024fine}. For RL-based baselines, we compare with GRPO~\citep{shao2024deepseekmath}, GiGPO~\citep{feng2025group}, HGPO~\citep{he2026hierarchy}, and Tree-GRPO~\citep{ji2025tree}. 

\subsection{Performance on Bright}
Table~\ref{tab:reasonir-results} reports NDCG@10 on BRIGHT. Existing retrieval baselines are strong; in particular, the state-of-the-art baseline Diver uses a larger 14B model to generate retrieval queries and summarize retrieved documents. Since RL training with a 14B model is computationally expensive, we evaluate our method with more practical 1.5B and 4B backbones.
RL training substantially improves smaller models. With a 1.5B backbone, our method outperforms the non-interaction 3B baseline DeepRetrieval; with a 4B backbone, it further surpasses the 7B baseline RaDeR and achieves an average score of 37.76, exceeding the Diver baseline average of 37.20. This suggests that learning from interaction signals can be a competitive alternative to relying solely on larger retrieval-generation pipelines.
Under the 1.5B setting, GRPO improves the average score from 11.75 to 24.98 by optimizing the final-step retrieval reward. However, this trajectory-level reward does not distinguish which intermediate reasoning or summary action caused the gain. GiGPO and HGPO introduce step-level or history-aware credit assignment, but their local signals are still mainly redistributed from the final reward. Tree-GRPO uses branching over alternative rollouts, but its credit estimation still depends on full-trajectory outcomes.
In contrast, our method exploits the executable nature of summaries: intermediate summaries can be directly evaluated by NDCG@10 and used as local credit anchors. We then propagate this credit to the paired reasoning span only when the influence and residual-effect gates indicate that the signal is reliable. With the 1.5B backbone, this retrieval-specific credit assignment improves the average score to 27.49, outperforming GRPO, GiGPO, HGPO, and Tree-GRPO under the same fixed retriever. The same trend holds for the 4B backbone, where RICE-PO achieves the best average score of 37.76, improving over GRPO, GiGPO, HGPO, and Tree-GRPO by 1.34, 1.45, 1.15, and 1.37 points, respectively.

\subsection{Performance on BEIR}
\begin{wraptable}{r}{0.62\textwidth}
\vspace{-10mm}
\centering
\small
\renewcommand\arraystretch{1.05}
\caption{Performance on BEIR datasets: DBPedia-Entity, FiQA-2018, SciFact, Touché-2020, and TREC-COVID. We report NDCG@10 for each dataset and the macro average across tasks.}
\label{tab:beir-5subset}
\vspace{1.5mm}
\resizebox{\linewidth}{!}{
\begin{tabular}{l|ccccc|c}
\hline
Method 
& DBP & FiQA & SciFact & Touché & TREC-COVID & Avg. \\
\hline
\rowcolor{gray!20}
\multicolumn{7}{c}{\textit{Retrieval baselines}} \\
BM25S & 32.0 & 25.4 & 69.1 & 34.7 & 68.8 & 46.00 \\
MonoT5-3B & 44.5 & 46.5 & 76.1 & 30.7 & 79.6 & 55.48 \\
RankLLaMA-7B & 43.7 & 42.1 & 71.1 & 41.4 & 80.2 & 55.70 \\
RankLLaMA-13B & 44.9 & 44.1 & 72.7 & 39.2 & 80.8 & 56.34 \\
Rank1-7B & 38.9 & 39.5 & 77.2 & 22.8 & 81.9 & 52.06 \\
Rank1-14B & 37.4 & 37.9 & 77.0 & 27.1 & 78.2 & 51.52 \\
Rank1-32B & 40.7 & 41.8 & 76.8 & 19.9 & 81.9 & 52.22 \\
\hline
\rowcolor{gray!20}
\multicolumn{7}{c}{\textit{DeepSeek-R1-Distill-Qwen-1.5B}} \\
Prompting & 59.65 & 40.94 & 69.21 & 38.27 & 95.01 & 60.62 \\
GRPO & 62.35 & 43.49 & 72.41 & 42.04 & 97.15 & 63.49 \\
HGPO & 63.15 & 43.55 & 73.91 & 41.21 & 96.03 & 63.57 \\
GiGPO & 63.30 & 43.85 & 75.37 & 41.47 & 95.61 & 63.92 \\
Tree-GRPO & 64.81 & 45.61 & 74.65 & 40.18 & 97.49 & 64.55 \\
\rowcolor[RGB]{248,228,222}
\textbf{RICE-PO(ours)} & 65.11 & 46.10 & 76.90 & 41.85 & 98.33 & 65.66 \\
\hline
\rowcolor{gray!20}
\multicolumn{7}{c}{\textit{Qwen2.5-3B-Instruct}} \\
Prompting & 61.56 & 41.34 & 73.00 & 38.36 & 93.87 & 61.63 \\
GRPO & 64.57 & 45.16 & 76.06 & 39.07 & 98.15 & 64.60 \\
HGPO & 64.23 & 45.58 & 76.40 & 39.90 & 97.40 & 64.70 \\
GiGPO & 65.69 & 46.07 & 75.90 & 38.32 & 97.48 & 64.69 \\
Tree-GRPO & 64.71 & 45.65 & 76.47 & 40.65 & 97.62 & 65.02 \\
\rowcolor[RGB]{248,228,222}
\textbf{RICE-PO(ours)} & 66.29 & 45.94 & 76.96 & 40.63 & 98.59 & 65.68 \\
\hline
\end{tabular}
}
\vspace{-4mm}
\end{wraptable}

Table~\ref{tab:beir-5subset} reports results on BEIR, evaluating general retrieval performance under a fixed retriever. Compared with strong retriever and reranker baselines (e.g., MonoT5, RankLLaMA, Rank1), query-generation policies consistently achieve higher scores, showing the benefit of optimizing the query itself.
Among RL-based methods, our approach achieves the best performance across both backbones. With DeepSeek-R1-Distill-Qwen-1.5B, our method improves the average score to 65.66, outperforming GRPO, GiGPO, HGPO, and Tree-GRPO. Similar gains are observed with Qwen2.5-3B-Instruct, where our method reaches 65.68, again achieving the best average.
Compared with BRIGHT, the improvements on BEIR are smaller but consistent. This is expected, as BEIR tasks are less reasoning-intensive and rely more on surface-level matching. Nevertheless, our method still provides gains, suggesting that retrieval-specific credit assignment generalizes beyond multi-step reasoning settings. In particular, using executable summaries as intermediate credit signals remains beneficial even when the interaction depth is limited.



\subsection{Further Analysis}
\begin{wrapfigure}{r}{0.4\textwidth}
    \centering
    \vspace{-4mm}
    \includegraphics[width=\linewidth]{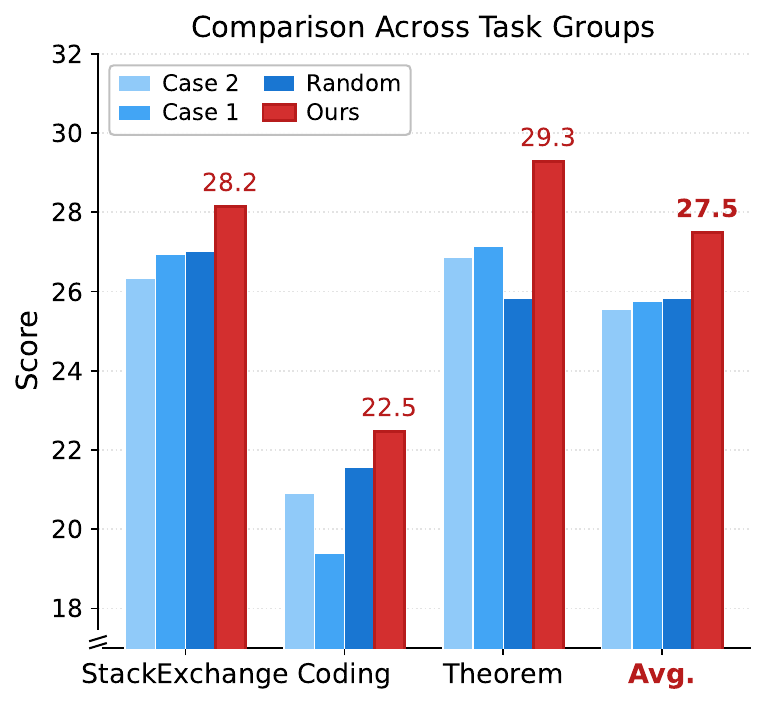}
    \vspace{-2mm}
    \caption{Ablation of reasoning credit assignment variants}
    \label{fig:case_ablation}
    \vspace{-4mm}
\end{wrapfigure}
\paragraph{Ablation of Credit Propagation Mechanisms} We conduct an ablation study on BRIGHT with DeepSeek-R1-Distill-Qwen-1.5B to analyze different reasoning-credit assignment strategies. 
For a fair comparison, all variants use the same entropy-triggered anchor points. 
Case 1 assigns the reasoning advantage from the paired step-level summary advantage. 
Case 2 assigns it from the final-summary reward, where NDCG@10 is computed from the last summary. 
Random uniformly selects between these two strategies. 
Our method instead uses influence and residual-effect estimates to dynamically decide whether summary-level credit should be propagated to the paired reasoning span.
As shown in Figure~\ref{fig:case_ablation}, our method achieves the best macro average, improving over Case 2, Case 1, and Random. 
The gains are particularly evident on coding and theorem-based tasks, where reasoning steps more directly affect the quality of subsequent retrieval summaries. 
In contrast, the gap is smaller on tasks with weaker reasoning–retrieval coupling, such as economics or Pony. 
These results indicate that neither always propagating local summary credit nor always relying on final-summary credit is sufficient. 
Instead, dynamically gating credit propagation based on interaction-derived signals provides a more reliable mechanism for assigning credit to latent reasoning actions.

\paragraph{Influence vs. Effect}A central design choice of our method is whether reasoning credit should be propagated based on local influence, downstream effect, or both. The key difference in Figure~\ref{fig:gate_ablation} lies in how credit propagation is gated.

The \textit{Influence Only} variant considers only the effect of the reasoning action on the induced summary, i.e., whether changing the reasoning step leads to different retrieval summaries. 
\begin{wrapfigure}{r}{0.35\textwidth} \centering \vspace{-4mm} \includegraphics[width=\linewidth]{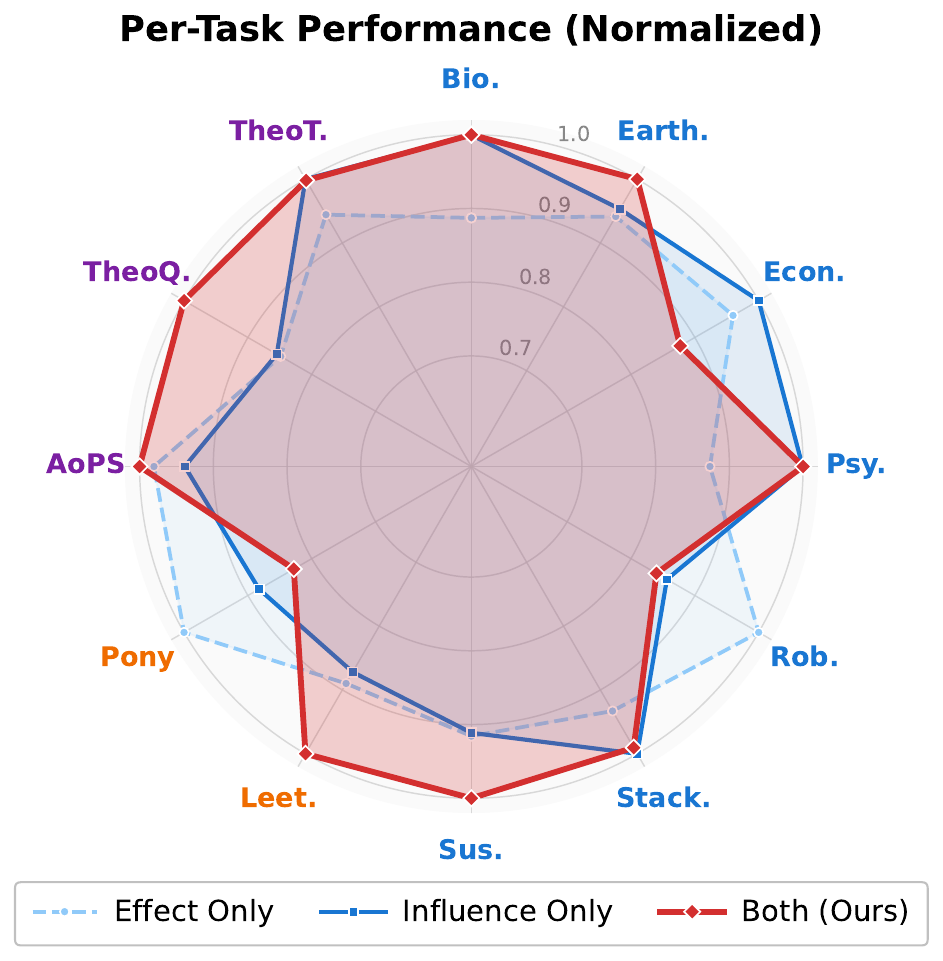} \vspace{-15pt} 
\caption{Model performance with effect-only and influence-only credit propagation} \label{fig:gate_ablation} \vspace{-15pt} 
\end{wrapfigure}
However, it ignores whether this change ultimately affects the final retrieval reward. 
In contrast, the \textit{Effect Only} variant considers only the impact of the reasoning action on the final reward, without explicitly measuring its influence on the intermediate summary.
Figure~\ref{fig:gate_ablation} shows that the two gates are complementary. 
\textit{Influence Only} is competitive on Economics and Psychology, where changing the intermediate summary is often useful, but it is weaker on Sustainable Living, LeetCode, and TheoremQA-Questions, where local changes may not survive later retrieval rounds. 
\textit{Effect Only} performs well on Robotics and Pony, but it can be noisy because final reward changes may come from later steps rather than the selected reasoning action. 
Our full method combines both signals and achieves stronger performance on most tasks, such as Biology, Earth Science, AoPS, and TheoremQA, supporting more reliable reasoning-level credit assignment.

\begin{wrapfigure}{r}{0.36\textwidth} \centering \vspace{-4mm} \includegraphics[width=\linewidth]{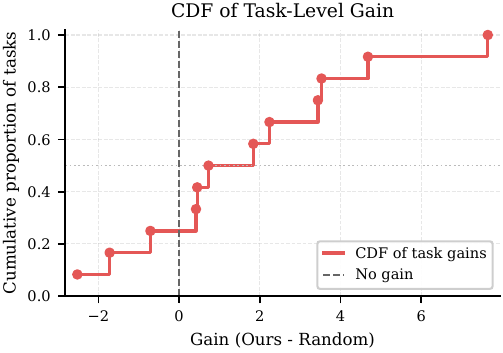} \vspace{-2mm} \caption{CDF of task-level gains from entropy-based triggering over random triggering.} \label{fig:entropy_ablation} \vspace{-4mm} 
\end{wrapfigure}
\paragraph{Entropy-based Triggering}
Figure~\ref{fig:entropy_ablation} shows the cumulative distribution of task-level gains from replacing random triggering with entropy-based triggering on BRIGHT. 
The x-axis reports the per-task gain in NDCG@10, computed as the difference between our method and the random-trigger baseline, while the y-axis shows the cumulative proportion of tasks. 
The vertical dashed line marks zero gain.
The curve shows that most tasks fall on the positive side, with 9 out of 12 tasks benefiting from entropy-based triggering. 
The median gain is about \(+1.28\), indicating that the improvement is not driven only by a single outlier. 
Several tasks obtain clear gains, such as TheoremQA-Questions \((+7.65)\) and Psychology \((+4.68)\), while only a few tasks, such as Economics and Pony, show negative gains. 
This suggests that entropy is an effective signal for selecting informative reasoning steps for local credit estimation, leading to broadly distributed improvements over random triggering.

%% file: 6_conclution.tex
\section{Conclusion}
We study how to better leverage interaction signals in reasoning-based retrieval agents. Existing RL methods mainly rely on trajectory-level rewards, which provide limited supervision for intermediate reasoning steps. We propose RICE-PO, a framework that directly exploits retrieval interaction signals by treating summaries as executable actions and using their retrieval outcomes as intermediate feedback. To ensure reliable learning, we introduce entropy-based triggering to select informative decision points and use influence and residual-effect estimation to determine when these signals should be propagated.
Experiments on BRIGHT and BEIR show that our method consistently outperforms prior RL approaches under the same retriever setting, with especially strong improvements for smaller models. These results suggest that effectively utilizing interaction signals, rather than relying solely on final rewards or larger models, is key to improving reasoning-intensive retrieval. More broadly, our work highlights the importance of designing learning algorithms that align with the structure of agent–environment interactions in multi-step language systems.

\section{Limitations and Future Directions}
\label{sec:limitations}

This work studies retrieval-interaction credit assignment with a fixed retriever and a bounded local branching budget. This setting lets us isolate how executable retrieval actions can provide reliable learning signals for latent reasoning steps while keeping training practical. Future work can extend this framework along three directions: combining interaction-derived credit with adaptive retrievers, scaling to larger backbones, and developing more efficient estimators for reasoning-to-summary influence and residual stability. More broadly, we view RICE-PO as an initial step toward training language agents from the structure of their interaction with external tools, beyond retrieval alone.
